\RequirePackage[svgnames,table]{xcolor}
\documentclass[11pt,letterpaper,logo]{yalearxiv}
\usepackage{natbib}
\setcitestyle{square}
\pretocmd{\abscontent}{\vspace{-1mm}\begin{center}\section*{\abstractname}\end{center}}{}{}
\DeclareFontShape{T1}{cmss}{b}{n}{<->ssub * cmss/bx/n}{}
\usepackage[utf8]{inputenc}
\usepackage[T1]{fontenc}
\usepackage{xurl}
\usepackage{caption}
\usepackage{amsmath,amssymb,amsthm,mathtools}
\usepackage{microtype}
\usepackage{booktabs,longtable,array}
\usepackage{algorithm}
\usepackage[noend]{algpseudocode}
\usepackage{graphicx,xcolor}
\usepackage{placeins}
\usepackage{wrapfig}
\hypersetup{colorlinks=true,allcolors=blue!50!black}
\usepackage[nameinlink,capitalise]{cleveref}
\usepackage{adjustbox}
\usepackage{etoc}
\tcbuselibrary{skins,breakable}
\definecolor{promptblue}{HTML}{244763}
\newtcolorbox{promptbox}[1]{
  enhanced,breakable,colback=promptblue!3!white,colframe=promptblue!70!white,
  colbacktitle=promptblue,coltitle=white,fonttitle=\sffamily\bfseries\small,
  fontupper=\small\sffamily,boxrule=.45pt,arc=1.5mm,
  left=3mm,right=3mm,top=2mm,bottom=2mm,toptitle=1.4mm,bottomtitle=1.4mm,
  before skip=9pt,after skip=9pt,title={#1},title after break={#1 (continued)},
  before upper={\raggedright\setlength{\parindent}{0pt}\setlength{\parskip}{4pt}}
}

\makeatletter
\def\theHALG@line{\thealgorithm.\arabic{ALG@line}}
\makeatother
\crefname{equation}{Eq.}{Eqs.}\Crefname{equation}{Eq.}{Eqs.}
\crefname{section}{Section}{Sections}\crefname{appendix}{Appendix}{Appendices}
\crefname{figure}{Figure}{Figures}\crefname{table}{Table}{Tables}
\hypersetup{pdftitle={Reward Engine: Reward-Centric Self-Evolution for Open-Ended Tool Agents}, pdfauthor={Litong Feng, Ji Qi, Yiran Cheng, Shaowu Pan, Xinyu Zhang, Qijun Li}}
\newcommand{\Info}{\mathcal{I}}
\newcommand{\Memory}{\mathcal{M}}
\DeclareMathOperator{\Build}{Build}
\DeclareMathOperator{\Attack}{Attack}
\DeclareMathOperator{\Repair}{Repair}
\DeclareMathOperator{\Critic}{Critic}
\DeclareMathOperator{\Propose}{Propose}
\DeclareMathOperator{\Select}{Select}
\newtheorem{proposition}{Proposition}

\crefname{proposition}{Proposition}{Propositions}
\crefname{algorithm}{Algorithm}{Algorithms}
\newcommand{\fitwidth}[1]{\adjustbox{max width=\linewidth}{#1}}
\newcommand{\AlgStage}[2]{\Statex\hspace*{\dimexpr\algorithmicindent*#1\relax}\textit{#2}}
\title{Reward Engine: Reward-Centric\\ Self-Evolution for Open-Ended Tool Agents}
\runningtitle{Reward Engine: Reward-Centric Self-Evolution for Open-Ended Tool Agents}
\makeatletter
\renewcommand\AB@affilsepx{\protect\\\protect\Affilfont}
\makeatother
\author[1,*]{Litong Feng}
\author[1,*,\textdagger]{Ji Qi}
\author[1]{Yiran Cheng}
\author[1]{Shaowu Pan}
\author[1]{Xinyu Zhang}
\author[1]{Qijun Li}
\affil[1]{Huawei openJiuwen Team}
\affil[*]{Equal contribution.}
\begin{document}
\etocdepthtag.toc{mtmain}
\begin{abstract}
Open-ended tool agents can complete the same objective through different tool paths, and their quality depends jointly on the interaction trace, environment state, and final artifacts, so such tasks rarely admit an executable ground-truth reward. The central challenge is therefore to construct a reward from existing experience that exposes the current policy's failures while remaining stationary during optimization. We introduce Reward Engine, which, like a self-motivated student, first works out what success requires and then optimizes toward it. From the task and initial trajectories, it builds a RewardPack of testcases for required correctness and rubrics for graded quality, each executed as a check that returns a judgment with evidence. The frozen RewardPack then guides the agent to revise and select its outputs, and can also serve as a reward for reinforcement learning. On WorkBuddy Bench Office and Claw-Eval, the constructed reward largely agrees with the benchmark graders, not reversing their ordering on 82.3\% and 80.5\% of candidate pairs. Used to revise executions, it raises three-run consistency ($\mathrm{Pass}^{3}$) over Baseline from 45.0\% to 54.0\% and from 78.4\% to 83.7\%.
\end{abstract}
\maketitle
\begingroup
\renewcommand{\thefootnote}{\textdagger}
\renewcommand{\footnotesize}{\fontsize{8}{10}\selectfont}
\footnotetext[2]{Ji Qi is also affiliated with the Department of Statistics and Data Science, The Chinese University of Hong Kong (CUHK).}
\endgroup

\section{Introduction}
\label{sec:introduction}
\begin{figure}[p]
\centering
\includegraphics[width=\textwidth,height=0.76\textheight,keepaspectratio]{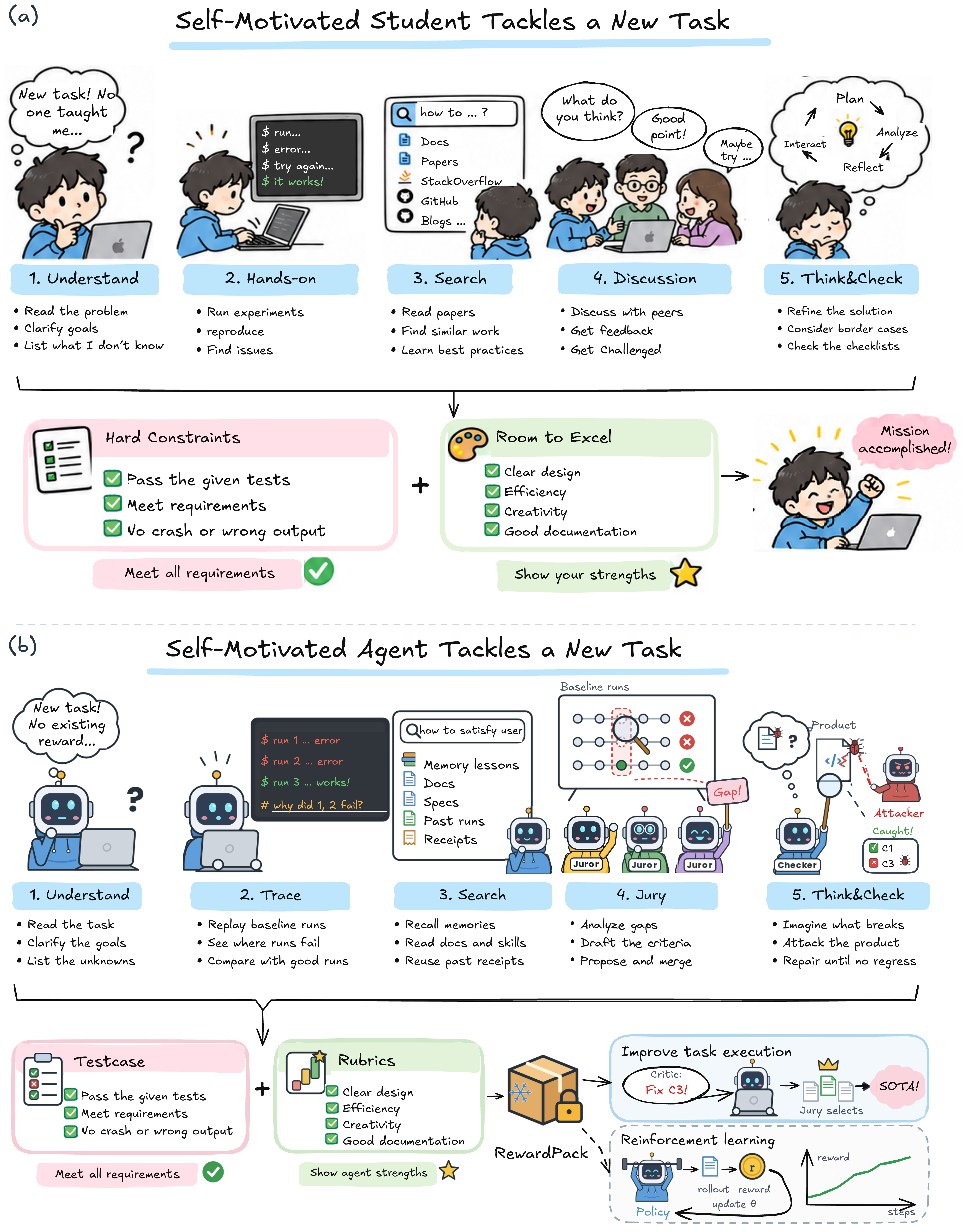}
\caption{From a self-motivated student to a self-motivated agent. (a) A student facing a new task identifies hard constraints and room to excel. (b) Testcases (required correctness) and rubrics (graded quality), bound to executable checks, form a frozen RewardPack. The reinforcement-learning branch is a possible extension and is not evaluated.}
\label{fig:motivation}
\end{figure}

Tool agents are increasingly asked to do open-ended work, such as reconciling spreadsheets, drafting reports from mixed-format files, or triaging tickets \citep{yao2025taubench,workbuddybench2026,ye2026claweval}. Unlike verifiable tasks such as math with final answers or code with unit tests \citep{cobbe2021verifiers,jimenez2024swebench}, whose rewards directly drive reinforcement learning \citep{shao2024deepseekmath}, these tasks come with no reward: many solutions are acceptable, and quality depends on the trace, environment state, and delivered artifacts. Constructing the reward thus becomes the key, and harder, problem; without it, an agent can neither critique and select among its attempts nor train on them. Common proxies fall short: a weekly report whose completion rate reads 92\% although the source rows give 78\% passes a file-existence check and can pass a holistic model judge, whose verdicts depend on presentation and ordering \citep{zheng2023judging,wang2024fair}. Like a self-motivated student facing an untaught task, the agent must first work out which constraints must hold and where there is room to excel (\cref{fig:motivation}), and the resulting reward must meet three requirements: 

\begin{itemize}
    \item \textbf{Evidence-grounded:} each criterion, whether a hard constraint (testcase) or graded quality (rubric), must be judged by recomputing it from the artifact, trace, and environment and returned with a receipt, such as ``expected 78\%, observed 92\%: fail''. The receipt tells a critic what to fix and a selector why one candidate ranks higher, and it keeps a violated requirement apart from an unreadable file or missing evidence.
    \item \textbf{Tested before use:} a drafted check can itself be wrong, for instance by reading the wrong column or accepting a near miss, so the reward must be tested against labeled counterexamples before the agent relies on it, and a repair must not break cases it already judged correctly.
    \item \textbf{Stationary:} the reward must stay fixed while the agent is revised or selected against it and change only between rounds; otherwise an apparent gain may come from a moved target rather than a better output.
\end{itemize}

\begin{figure}[t]
\centering
\includegraphics[width=\textwidth]{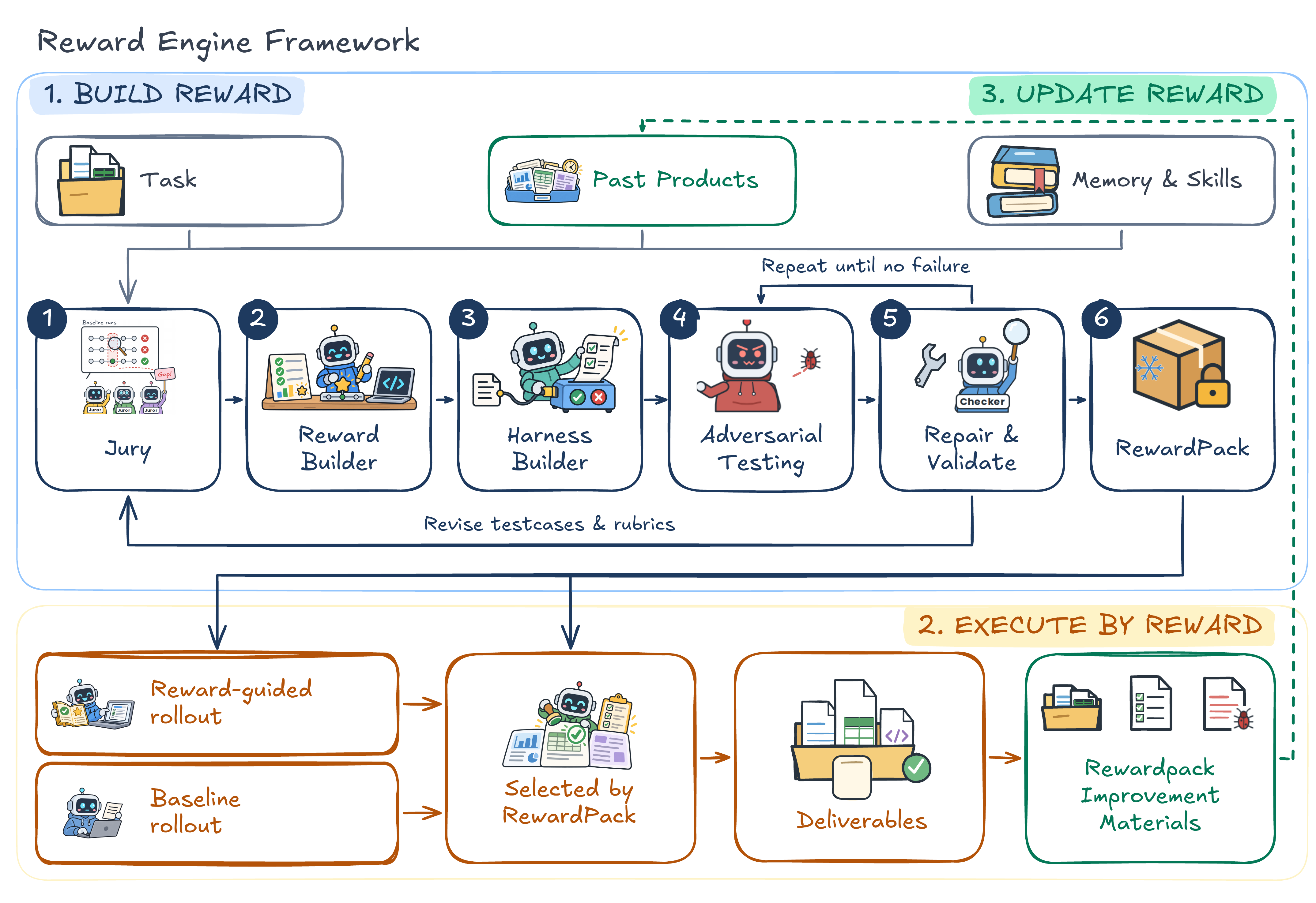}
\caption{Reward Engine in three stages. \emph{1. Build reward (blue):} criteria and executable checks are tested and repaired before a RewardPack is frozen. \emph{2. Execute by reward (orange):} the frozen pack supports revision and selection. \emph{3. Update reward (green):} products, receipts, and use-time defects inform the next construction round. Benchmark scores are reserved for evaluation.}
\label{fig:framework}
\end{figure}

Existing evaluators and reward-construction methods meet these requirements only in part (\cref{tab:related_comparison} in \cref{app:related_work}). Model judges, rubric judges, and trained verifiers \citep{zheng2023judging,kim2024prometheus,saadfalcon2025lmunit,gunjal2026rubrics,cobbe2021verifiers} score outputs, but their judgments are not bound to re-runnable procedures, and at most partially record the evidence behind them \citep{zhuge2025agentjudge,rao2026autorubric}. Reward-construction methods write executable reward code or tests \citep{ma2024eureka,xie2024text2reward,chen2023codet} but keep no evidence receipts, so a violated requirement cannot be told apart from a broken check. Most importantly, none tests the reward for a task on labeled counterexamples before optimizing against it, or accepts a repair only if earlier agreement is preserved; the closest, EvoRubrics \citep{ding2026evorubrics}, adapts its rubrics adversarially but updates them at every training step, so the target moves while it is used. What is missing is a reward that is evidence-grounded, tested against counterexamples before use, and held fixed while the agent is optimized against it.

We introduce \textbf{Reward Engine} (\cref{fig:framework}) to close this gap: it builds such a reward for each task and uses it to improve the agent's deliverable. Each round has three stages:

\textbf{Build reward.} From the task and a few initial trajectories, never hidden answers or benchmark scores, a Jury drafts the task's obligations as testcases or graded rubrics, and a Builder binds each to an executable check that returns a judgment with an evidence receipt (\cref{sec:setup,sec:construct}). A quality-control loop tests the reward on labeled control, error, and boundary cases and accepts only repairs that lose no previously matched case; the result is frozen as a \emph{RewardPack} (\cref{sec:repair}).

\textbf{Execute by reward.} A Critic turns failed criteria into revision feedback, and a Jury ranks the revision with the round's other candidates; a candidate replaces the fresh task-only one only if it ranks strictly higher and fails no test the task-only candidate passes (\cref{sec:use}).

\textbf{Update reward.} Products, receipts, and defects found during use inform only the next RewardPack, so the reward never shifts while it is used; a meta loop across task families, which we do not evaluate, can further refine how rewards are built. On WorkBuddy Bench Office and Claw-Eval, the RewardPack does not reverse the benchmark grader's ordering on 82.3\% and 80.5\% of candidate pairs and raises $\mathrm{Pass}^{3}$ over Baseline from 45.0\% to 54.0\% and from 78.4\% to 83.7\% (\cref{sec:main_results}).

Our main contributions are (1) \textbf{An evidence-grounded RewardPack} of testcases for required correctness and rubrics for graded quality, built from the task and initial trajectories, each criterion and rubric bound to an executable evidence check (\cref{sec:setup,sec:construct}). (2) \textbf{Reward-centric self-improvement}, in which the frozen RewardPack guides Critic-driven revision and Jury selection of the deliverable and is updated only between rounds (\cref{sec:use,fig:framework}). (3) \textbf{Two-benchmark evidence} on WorkBuddy Bench Office and Claw-Eval that the RewardPack largely agrees with the benchmark graders and raises the delivered output's pass rate over Baseline (\cref{sec:experiments}).

\section{Problem setup}
\label{sec:setup}\label{sec:preliminaries}
\paragraph{Tasks and objective.} A task supplies public requirements $x$, including its input files, and a tool environment. An actor $\pi$, a language model in a fixed tool harness, takes actions $a_t\sim\pi(\cdot\mid h_t)$ given its context $h_t$ and delivers a \emph{product} $y$: the artifact, its interaction trace, and the resulting environment state. The task's quality $q(y)$, here the benchmark grader's score, is hidden. The system sees only the public information $\Info$: $x$, updates from the task's own (possibly simulated) user, public background knowledge the task does not forbid, the execution history, and a few initial runs $\mathcal B$ of $\pi$ on $x$ alone. From $\Info$ it must (i) construct a reward $R$ whose judgments track $q$ and (ii) use $R$ to deliver a product $y^\star$ of higher $q$ than $\pi$ attains alone; \cref{sec:main_results} measures (i) by order agreement with the grader and (ii) by the scores and pass rates of $y^\star$. The weights of $\pi$ are never updated: what evolves is the reward and the process that builds it. Hidden answers and grading outputs are excluded by design.

\paragraph{Reward.} A reward binds criteria to checks, and running a check on a product returns a judgment with its evidence:
\begin{equation}
 R=\{(c_i,\kappa_i)\}_{i=1}^{m},\qquad
 R(y)=\bigl(\kappa_i(y;x)\bigr)_{i=1}^{m},\qquad
 \kappa_i(y;x)=(z_i,e_i).
 \label{eq:measurement}
\end{equation}
A criterion $c_i$ is a \emph{testcase} (required correctness, $z_i\in\{\mathsf{pass},\mathsf{fail}\}$) or a \emph{rubric} (graded quality, $z_i\in\{\mathsf{fail},\mathsf{partial},\mathsf{full}\}$ against written anchors); $z_i=\bot$ means no judgment, and several bound tests give a tuple (\cref{app:formal_contracts}). A check $\kappa_i$ is a stored, re-runnable procedure, either code, such as recomputing a rate from source rows, or a model judgment over evidence it first locates, so it need not be deterministic. The \emph{receipt} $e_i$ references the inspected evidence and records the execution status, keeping apart an observed violation (a failing judgment), insufficient evidence ($\bot$, normal status), and a check that did not run ($\bot$, error status). A \emph{RewardPack} stores one version $R_v$ with the records needed to audit and re-run it (\cref{tab:rewardpack_schema}).

$R(y)$ is a vector, not a scalar, so each consumer states its use: the Critic turns failing judgments and receipts into feedback (\cref{eq:feedback}); selection ranks candidates by a Jury that reads artifacts and receipts, and rejects any candidate that fails a test the task-only candidate passes (\cref{eq:select}); and the grader comparison orders candidates by the share of resolved criteria that pass. Reinforcement learning would need a similar reduction and is not evaluated.

\paragraph{Requirements.} The requirements of \cref{sec:introduction} become conditions on $R$. \emph{Evidence-grounded}: every judgment other than $\bot$ carries a receipt naming the parts of $y$ and $x$ it used. \emph{Tested before use}: $R$ is first run on labeled cases $(y_j,i_j,z^\star_j)$, each a product, a targeted criterion, and a reference label assigned blind to $R$; $R$ \emph{agrees} on a case if $z_{i_j}=z^\star_j$, is \emph{unresolved} if $z_{i_j}=\bot$, and \emph{disagrees} otherwise, and a repair must keep every case already agreed on (\cref{sec:repair}). \emph{Stationary}: of the three layers, the actor $\pi$, the reward $R$, and the construction state $\phi$ that builds rewards (\cref{sec:construct}), only $\pi$ is fixed throughout. We index actor steps by $t$, rounds by $n$, reward versions by $v$, and updates of $\phi$ by $k$. Quality control may produce several versions within a round; the last is frozen for the round's execution stage, where feedback changes only the context $h_t$, and revisions of $R$ or $\phi$ take effect only at round boundaries. A gain within a round therefore comes from a better product, not a moved target.

\section{Reward Engine}
\label{sec:method}
\Cref{alg:main} gives the per-task procedure over the three stages of \cref{fig:framework}; \cref{app:method_details} gives the notation, RewardPack schema, and implementation details.

\subsection{Constructing criteria and checks}
\label{sec:construct}
\paragraph{Jury.} Three Jury members draft obligations from the public task before seeing any artifact, then revisit them against the initial task-only runs $\mathcal B$, which show where the current policy falls short and where acceptable solutions differ. A reducer merges the drafts into criteria, each marked as a testcase or a rubric, keeping each criterion's source, applicability, and fail/partial/full anchors; unresolved obligations stay visible. For the weekly report, Jury fixes the reporting period and the completion-rate definition. An obligation that no member proposes remains an omission that this review cannot detect (\cref{app:runtime_details}).

\paragraph{Builder.} Builder turns each criterion into executable tests, refining its fail/partial/full anchors into the rubric a model-judge check applies (Reward Builder), and binds them to inputs, tools, and judgment procedures (Harness Builder); for the weekly report, the completion-rate test recomputes the rate from the source rows. Construction is guided by the \emph{construction state} $\phi=(\omega,\Memory)$, which pairs Strategy $\omega$ with Memory $\Memory$, under a scope, delivery contract, and budget:
\begin{equation}
 R_0\sim\Build_\phi(\Info),
 \label{eq:build_operator}
\end{equation}
where $\Build$ may instead return an explicit construction failure. Each receipt of \cref{eq:measurement} is bound to the inspected artifact, environment snapshot, inputs, scope, and check version, so a judgment can be audited, and a receipt is reused only when its artifact, environment snapshot, scope, and check version all still match (\cref{app:formal_contracts}).

\paragraph{Strategy and Memory.} Strategy $\omega$ holds module guidance and settings for quality control (QC, \cref{sec:repair}), and Memory $\Memory$ stores construction lessons with their sources and validation status. Stage 3 of every round (\cref{fig:framework}) adds lessons to Memory, so Memory grows within the inner loop, whereas Strategy changes only in the meta loop of \cref{alg:main}. That loop proposes updates of $\phi$ on training task families and selects them on disjoint validation families; it changes how evaluators are built ($\phi$), not how tasks are solved ($\pi$). We do not evaluate the meta loop: \cref{sec:experiments} runs the inner loop only.

\subsection{Self-attack and regression-gated repair}
\label{sec:repair}
A drafted check may be implemented incorrectly, or an obligation may be missing or mis-stated. QC (Adversarial Testing and Repair \& Validate in \cref{fig:framework}) targets both. At version $v$ it constructs labeled cases of the form defined in \cref{sec:setup},
\begin{equation}
 A_v=\{(y_j,i_j,z^\star_j)\}_j\sim\Attack(R_v,\Info),
 \label{eq:attack}
\end{equation}
of three kinds: \emph{control} cases should pass, \emph{error} cases violate the targeted criterion (e.g., an edited cell that breaks the completion rate), and \emph{boundary} cases are near misses. Reference judges, which are language models, label anonymized cases against public evidence, blind to the RewardPack, the attack rationale, and the predicted label. The cases join the repair ledger $Q_v=Q_{v-1}\cup A_v$, with $Q_{-1}=\varnothing$. Agreement, disagreement, and unresolved judgments are as defined in \cref{sec:setup}; judgments are recorded once per version and replayed. Let $G_v\subseteq Q_v$ be the cases on which $R_v$ agrees and $F_v\subseteq A_v$ the new cases on which it disagrees. A case whose repair is rejected stays in the ledger as a disagreement and is not targeted again, although a later accepted repair that also fixes it adds it to the agreed cases.

Jury revises missing or mis-stated obligations and Builder repairs checks, producing a \emph{minimal} candidate $\widetilde R\sim\Repair_\phi(R_v,F_v)$ that preserves unrelated criteria and checks and remeasures only affected bindings. Writing $\widetilde G$ and $\widetilde U$ for the cases of $Q_v$ on which the candidate agrees and is unresolved, it is accepted only if
\begin{equation}
 \begin{aligned}
 &\text{no regression:} && G_v\subseteq\widetilde G,\\
 &\text{fixes a failure:} && F_v\cap\widetilde G\neq\varnothing,\\
 &\text{no unresolved comparison:} && \widetilde U=\varnothing.
 \end{aligned}
 \label{eq:accept}
\end{equation}
The implementation also requires unchanged known reference labels and supported fresh boundary cases around the repair (\cref{app:runtime_details}). The last condition of \cref{eq:accept} is strict: a ledger case that stays unresolved under every version blocks later repairs until its evidence is resolved, and QC then keeps the incumbent. Then $R_{v+1}=\widetilde R$ if the candidate is accepted and $R_{v+1}=R_v$ otherwise, and QC continues until an attack finds no failure or its budget is exhausted.

The acceptance rule makes agreement monotone, $G_v\subseteq G_{v+1}$: a reference case the reward once matched is never lost (\cref{prop:ratchet} in \cref{app:ratchet_proof}). The guarantee has three limits. It is relative to the reference labels, which are themselves fallible model judgments whose accuracy we do not measure, so a systematically wrong label is kept as firmly as a correct one. It covers only accumulated cases; accuracy on unseen artifacts requires held-out evaluation. And it covers only QC repairs: at a round boundary (stage 3), BoundaryUpdate in \cref{alg:main} keeps, rebuilds, or revises the reward under the caller's rules, which need not preserve $G_v$ (\cref{app:runtime_details}).

\subsection{Using a frozen reward}
\label{sec:use}
After QC the latest version is frozen for execution (stage 2 of \cref{fig:framework}); we write $R$ for it below. Products, receipts, and defects found during use inform construction only at the next round boundary (stage 3).

\paragraph{Critic feedback.} Actor steps append actions and observations to the context, and the terminal review of a reward-guided delivery $y_T$ appends Critic feedback computed from targeted checks of the frozen reward:
\begin{equation}
 \begin{aligned}
 &\text{actor step:} && a_t\sim\pi(\cdot\mid h_t), && h_{t+1}=h_t\oplus(a_t,o_{t+1}),\\
 &\text{terminal review:} && f\sim\Critic_R(\cdot\mid h_T,R_{S}(y_T)), && h'_T=h_T\oplus f,
 \end{aligned}
 \label{eq:feedback}
\end{equation}
where $R_{S}(y_T)$ restricts \cref{eq:measurement} to the set $S$ of checks the review targets (at most three in our runs). Critic suggests, requests clarification, or abstains, citing supporting receipts. When the review is nonempty and budget remains, the same actor session continues from $h'_T$, and the revised delivery is then rechecked. This loop uses the limits and stopping conditions in \cref{app:evaluated_config}, with $\pi$ and $R$ fixed throughout. Because unselected checks stay unverified, stopping does not certify that the whole RewardPack passes.

\paragraph{Candidates and selection.} Each round generates three candidates: a \emph{task-only candidate} $y^\circ$ from the task alone, a \emph{reward-guided revision} using the RewardPack and terminal Critic review, and a \emph{hypothesis-guided candidate} whose actor receives guidance from a Hypothesis module but neither the RewardPack nor the other candidates. The complete candidate-generation protocol is specified in \cref{app:evaluated_config}. In this second Jury role, three members rank all candidates generated so far from their artifacts and receipts, and a reducer adjudicates the rankings against the evidence, giving an order $\succ$. The selector chooses among the current round's candidates $\mathcal Y_n$ and keeps $y^\circ$ unless one ranks strictly above it without a known regression:
\begin{equation}
 y^\star=\max\nolimits_{\succ}\bigl(\{y^\circ\}\cup\mathcal Y^{+}\bigr),\qquad
 \mathcal Y^{+}=\{y\in\mathcal Y_n:\ y\succ y^\circ,\ \text{no test of $R$ that $y^\circ$ passes fails on $y$}\}.
 \label{eq:select}
\end{equation}
We call $y^\star$ \emph{Selected by reward}; ties are broken as in \cref{app:evaluated_config}. The selector sees no benchmark scores or grading receipts. The reward enters selection as evidence for the Jury and as a test-level veto, not as a scalar score; the order agreement in \cref{sec:main_results} evaluates the reward's own pass-share ordering (\cref{sec:setup}), not the Jury's ranking. Each round therefore makes many more model calls than a single actor run, across the Jury, Builder, QC, Critic, and ranking roles; \cref{app:existing_costs} explains why we report no total-cost or equal-budget comparison.

\begin{algorithm}[t]
\caption{Reward Engine: inner and meta loops. Experiments evaluate the inner loop only.}
\label{alg:main}
\small\raggedright
\begin{algorithmic}[1]
\Require task families $\mathcal D_{\rm tr},\mathcal D_{\rm val},\mathcal D_{\rm te}$, each task $x$ with initial task-only runs $\mathcal B$; construction state $\phi_0=(\omega_0,\Memory_0)$; construction, attack, and execution budgets
\Procedure{InnerLoop}{$x,\mathcal B;\phi$} \Comment{public information $\Info$ includes $x$ and $\mathcal B$}
  \For{round $n=1,\dots,N$}
    \AlgStage{2}{Stage 1: build reward}
    \State \textbf{if} $n=1$: $R\gets\Build_\phi(\Info)$;\ $Q\gets\varnothing$ \Comment{\cref{eq:build_operator}}
    \State \textbf{else}: $R\gets\Call{BoundaryUpdate}{R,Q,\Delta\Info}$
    \Repeat \Comment{QC: self-attack and repair}
      \State $A\gets\Attack(R,\Info)$, blindly labeled;\ $Q\gets Q\cup A$;\ $F\gets$ cases of $A$ on which $R$ disagrees
      \State \textbf{if} $F\neq\varnothing$: $\widetilde R\gets\Repair_\phi(R,F)$;\ $R\gets\widetilde R$ if \cref{eq:accept} and the checks of \cref{app:runtime_details} hold
    \Until{$F=\varnothing$ or the attack budget is spent}
    \AlgStage{2}{Stage 2: execute by reward ($R$ frozen)}
    \For{$h_0\in\Call{CandidateContexts}{x,R}$} \Comment{candidate protocol in \cref{app:evaluated_config}}
      \State $h\gets h_0$
      \While{delivery incomplete and the execution budget remains}
        \State $a\sim\pi(\cdot\mid h)$; observe $o$;\ $h\gets h\oplus(a,o)$
      \EndWhile
      \State \textbf{if} reward-guided: $h\gets$ \Call{TerminalReviewLoop}{$h,R$} \Comment{\cref{eq:feedback}}
    \EndFor
    \State Jury ranks all candidates so far from artifacts and receipts;\ select $y^\star$ among this round's \Comment{\cref{eq:select}}
    \AlgStage{2}{Stage 3: update reward}
    \State $\Delta\Info\gets$ products, receipts, and use-time defects;\ $\Memory\gets\Memory\cup{}$construction lessons
  \EndFor
  \State \Return $y^\star$
\EndProcedure
\AlgStage{0}{Meta loop: update the construction state}
\For{$k=0,1,\dots$ until the incumbent is retained}
  \State $\Phi_k\gets\Propose(\phi_k;\mathcal D_{\rm tr})\cup\{\phi_k\}$ \Comment{from \textsc{InnerLoop} runs on $\mathcal D_{\rm tr}$}
  \State $\phi_{k+1}\gets\Select_{\mathcal D_{\rm val}}(\Phi_k)$
\EndFor
\State \Return \Call{InnerLoop}{$x,\mathcal B;\phi_k$} for each $x\in\mathcal D_{\rm te}$ \Comment{Strategy $\omega_k$ fixed; Memory still grows}
\end{algorithmic}
\end{algorithm}

\section{Experiments}
\label{sec:experiments}
We ask three questions in \cref{sec:main_results}: does Reward Engine improve the delivered artifact; does the frozen reward select better candidates than reward-free policies; and does its ordering agree with the benchmark grader's? \Cref{app:sensitivity} reports sensitivity analyses.

\subsection{Experimental setup}
\label{sec:exp_setup}
\paragraph{Benchmarks.} \emph{WorkBuddy Bench Office} has 50 mixed-format office tasks scored 0--100 by deterministic rules and a model judge \citep{workbuddybench2026}. \emph{Claw-Eval} provides executable environments and human-checked rubrics \citep{ye2026claweval,clawevaldocs2026}; we use 104 frozen task versions from 68 scenarios (36 bilingual pairs), scored 0--100 by its native grader with a custom judge (\cref{app:benchmarks}).

\paragraph{Compared outputs.} Each task is executed in sequential rounds. In every round, the system generates a fresh task-only candidate from the public task alone, a reward-guided revision that also receives the RewardPack and terminal Critic review, and a hypothesis-guided candidate; the selector then returns Selected by reward (\cref{sec:use}). We compare Selected by reward with Baseline, Revision, uniform choice among the generated candidates, and an Oracle that selects the candidate with the highest benchmark score. Baseline and policy aggregation are defined in \cref{app:baselines}, and the complete candidate-generation protocol is given in \cref{app:evaluated_config}.

\paragraph{Metrics.} We report mean score, average pass rate, three-run success coverage ($\mathrm{pass@3}$), and three-run consistency ($\mathrm{Pass}^{3}$), together with paired 95\% confidence intervals that resample WorkBuddy Bench Office tasks or Claw-Eval scenarios. For selection, we additionally report regret to the oracle and the share of the uniform-to-oracle gap recovered. Metric definitions, aggregation, and statistical procedures are specified in \cref{app:baselines,app:statistics}; per-round performance is reported in \cref{app:per_round}.

\subsection{Main results}
\label{sec:main_results}\label{sec:selection}\label{sec:order_agreement}\label{sec:measurement}
\begin{table}[t]
\caption{Delivery quality. Brackets show paired 95\% confidence intervals for the mean difference from Baseline; Sign $p$ is a two-sided sign test on per-task differences. Bold marks the best mean and pass rate. Metric definitions are in \cref{app:statistics}.}
\label{tab:main_results}\label{tab:benchmark_main}\label{tab:benchmark_wtl}\label{tab:main_office50}\label{tab:main_claw}
\centering\small\setlength{\tabcolsep}{3pt}
\begin{tabular}{lrrlr}
\toprule
Method & Mean & Avg. pass (\%) & $\Delta$ mean [95\% CI] & Sign $p$ \\
\midrule
\multicolumn{5}{l}{\textit{WorkBuddy Bench Office}} \\
Baseline & 81.05 & 66.67 & --- & --- \\
Revision & 81.10 & 70.67 & +0.05 [$-$2.42, 2.27] & 0.0649 \\
Selected by reward & \textbf{82.21} & \textbf{71.33} & +1.16 [$-$1.60, 3.96] & 0.0854 \\
\midrule
\multicolumn{5}{l}{\textit{Claw-Eval}} \\
Baseline & 89.44 & 89.74 & --- & --- \\
Revision & 90.97 & \textbf{93.59} & +1.53 [$-$0.42, 3.33] & $<10^{-4}$ \\
Selected by reward & \textbf{91.47} & \textbf{93.59} & +2.03 [0.90, 3.21] & 0.0008 \\
\bottomrule
\end{tabular}
\end{table}

\begin{table}[t]
\caption{Three-run success coverage ($\mathrm{pass@3}$) and consistency ($\mathrm{Pass}^{3}$). Bold marks the best value in each column. Definitions and aggregation are given in \cref{app:baselines,app:statistics}.}
\label{tab:three_round_success}
\centering\small
\begin{tabular}{lrrrr}
\toprule
 & \multicolumn{2}{c}{WorkBuddy Bench Office} & \multicolumn{2}{c}{Claw-Eval} \\
Method & $\mathrm{pass@3}$ (\%) & $\mathrm{Pass}^{3}$ (\%) & $\mathrm{pass@3}$ (\%) & $\mathrm{Pass}^{3}$ (\%) \\
\midrule
Baseline & 85.00 & 45.00 & 98.08 & 78.37 \\
Revision & 84.00 & \textbf{56.00} & 98.08 & \textbf{87.50} \\
Selected by reward & \textbf{86.00} & 54.00 & \textbf{100.00} & 83.65 \\
\bottomrule
\end{tabular}
\end{table}

\begin{figure}[t]
\centering
\includegraphics[width=\textwidth]{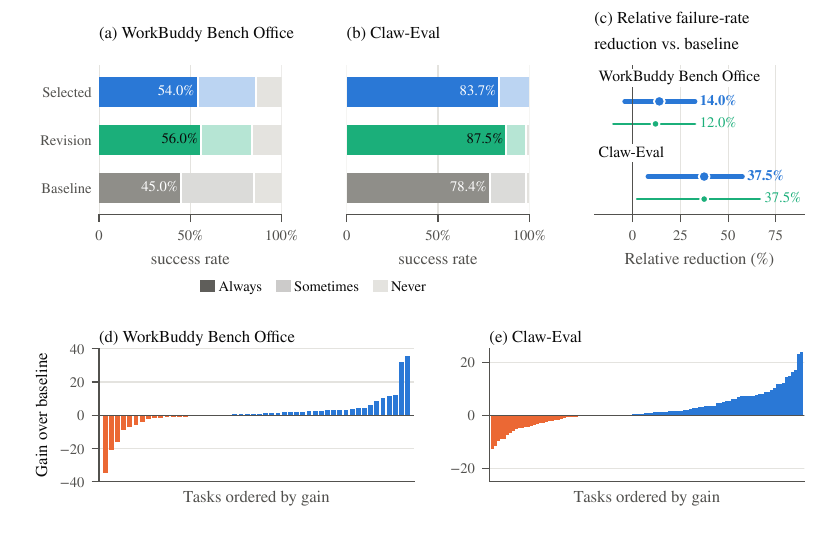}
\caption{Benchmark results: (a, b) success consistency, where Always, Sometimes, and Never denote three, one or two, and zero successes within a three-run group; dark segments report $\mathrm{Pass}^{3}$. (c) Relative failure reduction with 95\% confidence intervals. (d, e) Sorted per-task score gains of Selected by reward over Baseline, retaining all tasks. Definitions and statistical details are given in \cref{app:baselines,app:statistics}.}
\label{fig:benchmark_results}\label{fig:main_results}
\end{figure}

\paragraph{Selected by reward has the highest mean score and coverage on both benchmarks.} It has the highest mean score on WorkBuddy Bench Office and Claw-Eval (\cref{tab:main_results}), gaining $+2.03$ ($[0.90,3.21]$) on Claw-Eval and $+1.16$ ($[-1.60,3.96]$) on WorkBuddy Bench Office, where the interval includes zero. It also has the highest success coverage, reaching every Claw-Eval task at least once ($\mathrm{pass@3}=100\%$), and raises $\mathrm{Pass}^{3}$ from 45.00\% to 54.00\% and from 78.37\% to 83.65\% (\cref{tab:three_round_success}; \cref{fig:benchmark_results}a,b).

\paragraph{Failures fall most on Claw-Eval.} Relative to Baseline, the run failure rate falls by 37.5\% on Claw-Eval, with an interval excluding zero, and by 14.0\% on WorkBuddy Bench Office, with an interval including zero (\cref{fig:benchmark_results}c). The per-task distributions retain both gains and losses (\cref{fig:benchmark_results}d,e). Always delivering the revision attains the highest $\mathrm{Pass}^{3}$ (56.00\% and 87.50\%) but a lower mean and coverage, so selection trades some consistency for higher average quality.

\begin{table}[t]
\caption{Selection utility of the frozen reward on the generated candidate pools. Gain is Selected minus uniform choice; headroom is the share of the uniform-to-oracle gap recovered; regret is oracle minus Selected. Policy comparisons are in \cref{tab:selection_quality}.}
\label{tab:reward_utility_summary}
\centering\small\setlength{\tabcolsep}{4pt}
\begin{tabular}{lrr}
\toprule
Quantity & WorkBuddy Bench Office & Claw-Eval \\
\midrule
Uniform $\rightarrow$ selected score & 80.38 $\rightarrow$ 82.21 & 89.11 $\rightarrow$ 91.47 \\
Gain over uniform [95\% CI] & +1.83 [0.22, 3.61] & +2.36 [1.56, 3.20] \\
Headroom recovered (\%) & 25.76 & 39.40 \\
Selection regret [95\% CI] & 5.29 [2.92, 8.53] & 3.63 [2.81, 4.49] \\
Best-hit rate (\%) & 49.33 & 58.65 \\
\bottomrule
\end{tabular}
\end{table}

\paragraph{Does the frozen reward select better candidates?} All policies choose among the same generated candidates, so selection quality can be compared on equal footing. Selecting by the frozen reward gains $+1.83$ ($[0.22,3.61]$) over uniform choice on WorkBuddy Bench Office and $+2.36$ ($[1.56,3.20]$) on Claw-Eval, recovering 25.76\% and 39.40\% of the uniform-to-oracle gap (\cref{tab:reward_utility_summary}). Its mean regret, 5.29 and 3.63, is the lowest among the non-oracle policies, compared with 6.40 and 4.13 for always revising.

\begin{table}[tb]
\caption{Order agreement and $\tau_+$ with 95\% confidence intervals.}
\label{tab:order_agreement}
\centering\small
\begin{tabular}{lrr}
\toprule
Task set & Agreement (\%) [95\% CI] & $\tau_+$ [95\% CI] \\
\midrule
\multicolumn{3}{l}{\textit{WorkBuddy Bench Office}} \\
All tasks & 82.33 [74.67, 89.33] & 0.647 [0.493, 0.787] \\
Distinguishable tasks & 84.69 [77.21, 91.50] & 0.694 [0.544, 0.830] \\
\midrule
\multicolumn{3}{l}{\textit{Claw-Eval}} \\
All tasks & 80.53 [74.75, 86.08] & 0.611 [0.495, 0.722] \\
Distinguishable tasks & 81.72 [75.96, 87.21] & 0.634 [0.519, 0.744] \\
\bottomrule
\end{tabular}
\end{table}

\paragraph{Does the reward agree with the benchmark grader?} We re-evaluate candidates against frozen RewardPacks and compare their within-task orderings with the benchmark grader. Order agreement, the share of grader-ordered candidate pairs that the reward does not reverse, is 82.33\% on WorkBuddy Bench Office and 80.53\% on Claw-Eval, and 84.69\% and 81.72\% on distinguishable tasks (\cref{tab:order_agreement}). The packs may have seen evaluated candidates; this analysis does not measure generalization to unseen candidates.

\section{Related work}
\label{sec:related}
\paragraph{Judges, rubrics, and verifiers.} Model judges scale open-ended evaluation but are sensitive to position and presentation \citep{zheng2023judging,wang2024fair}. Fine-grained rubric judges \citep{kim2024prometheus}, rubrics and checklists as rewards \citep{gunjal2026rubrics,viswanathan2025checklists}, reusable rubric evaluation \citep{rao2026autorubric}, and natural-language unit tests \citep{saadfalcon2025lmunit} make criteria explicit, and Agent-as-a-Judge adds agentic inspection \citep{zhuge2025agentjudge}. Trained verifiers and process reward models score solutions or steps \citep{cobbe2021verifiers,lightman2024verify,zhang2025genverifier}, and AgentRewardBench documents disagreement between automatic and expert judgments of web-agent trajectories \citep{lu2025agentrewardbench}. Our unit of evaluation is instead a criterion bound to an executed procedure whose receipt names the artifact, scope, and version, with explicit status for errors and unresolved evidence.

\paragraph{Constructing and repairing rewards.} Language models can design rewards from descriptions \citep{kwon2023reward} and write reward code refined with training feedback \citep{ma2024eureka,xie2024text2reward}; CodeT selects programs with generated tests \citep{chen2023codet}; and EvoRubrics co-evolves rubrics adversarially with the policy \citep{ding2026evorubrics}. Reward Engine instead tests the reward against counterexamples before use, gates repairs by regression on accumulated cases as in software regression testing, and freezes the result while it is used (\cref{tab:related_comparison}). Freezing fixes the standard but does not remove proxy misspecification, since a fixed proxy can still be overoptimized or hacked \citep{skalse2022hacking,gao2023overoptimization,pan2024incontext}.

\paragraph{Feedback, refinement, and persistent experience.} Self-Refine, Reflexion, CRITIC, and TextGrad revise outputs or system components with language feedback \citep{madaan2023selfrefine,shinn2023reflexion,gou2024critic,yuksekgonul2025textgrad}, although re-prompting alone can account for gains attributed to critique \citep{stechly2025verification}. ExpeL, ADAS, and SkillRL retain lessons, agent designs, or skills \citep{zhao2024expel,hu2025adas,xia2026skillrl}, and Constitutional AI, PPO, and GRPO update parameters from feedback or rewards \citep{bai2022constitutional,schulman2017ppo,shao2024deepseekmath}. Reward Engine exposes the measurement behind its feedback, and its persistent target is evaluator construction rather than task solving.
\label{sec:related_end}

\section{Conclusion}
\label{sec:conclusion}
Reward Engine makes reward construction part of solving open-ended tool tasks. From public task requirements and execution experience, it builds evidence-grounded checks for required correctness and graded quality, tests them against counterexamples, and holds the resulting reward fixed while it guides revision and selection. Updates between rounds incorporate new evidence without changing the standard during use. On WorkBuddy Bench Office and Claw-Eval, this approach raises three-run consistency over Baseline and improves selection over uniform choice among the same candidates by 1.83 and 2.36 points, respectively, with 95\% confidence intervals excluding zero. Mean delivery gains over Baseline are clearest on Claw-Eval. The constructed reward also does not reverse the benchmark graders' ordering on over 80\% of evaluated candidate pairs. These results support constructing and validating rewards as a practical route to improving tool-agent outputs at inference time. Applying the same reward interface to reinforcement learning is a direction for future work.

\bibliography{references}
\bibliographystyle{plainnat}

\clearpage
\appendix
\etocdepthtag.toc{mtappendix}
\etocsettagdepth{mtmain}{none}
\etocsettagdepth{mtappendix}{subsection}
\etocsettocstyle{\section*{Appendix contents}\begingroup\small\setlength{\parskip}{1pt}}{\endgroup}
\etocsetstyle{section}{}{}{\par\vspace{4pt}\noindent\textbf{\makebox[1.4em][l]{\etocnumber}\etocname}\hfill\textbf{\etocpage}\par}{}
\etocsetstyle{subsection}{}{}{\noindent\hspace*{1.4em}\makebox[2.2em][l]{\etocnumber}\etocname\leaders\hbox to .6em{\hss.\hss}\hfill\etocpage\par}{}
\tableofcontents

\noindent\Cref{app:related_work} compares Reward Engine with related methods, \cref{app:method_details} gives the notation, RewardPack schema, and implementation details, and \cref{app:theory} proves the regression-gate invariant. \Cref{app:benchmarks} describes the benchmarks, baselines, implementation, statistical procedures, grading provenance, and recorded costs. \Cref{app:additional_results} supports each main-text result in the order of \cref{sec:experiments}: delivery quality, per-round results, sensitivity analyses, selection, agreement with the benchmark grader, and auxiliary measurement case studies.

In method-comparison tables, bold marks the best displayed point estimate among comparable methods, including ties. Higher scores and success rates, and lower regret and error rates, are preferred. Oracle upper bounds and label-defining comparators are excluded; bold does not imply statistical significance.

\section{Comparison with related methods}
\label{app:related_work}
\Cref{tab:related_comparison} compares Reward Engine with the evaluators and reward-construction methods discussed in \cref{sec:related} on the properties that define its reward. Marks describe each method as specified and evaluated in the cited work, not extensions it could support; feedback and experience methods, which revise the actor rather than the evaluator, are discussed only in \cref{sec:related}. Most methods already keep the evaluator fixed while it is used. What distinguishes Reward Engine is the combination of evidence receipts, per-task counterexample testing, and a regression gate. These make evaluator failures visible and keep repairs from losing earlier agreement, but they do not by themselves make the reward correct: the gate holds only on generated cases with fallible reference labels (\cref{prop:ratchet}).

\begin{table}[htbp]
\caption{Properties of evaluators and reward-construction methods as evaluated in the cited work. Yes: provided; Part.: partially provided (see notes); No: not provided; ---: not applicable, because the work only evaluates and nothing is optimized against the evaluator.}
\label{tab:related_comparison}
\centering\footnotesize\setlength{\tabcolsep}{3pt}
\begin{tabular}{@{}>{\raggedright\arraybackslash}p{.25\linewidth}*{5}{>{\centering\arraybackslash}p{.1\linewidth}}>{\raggedright\arraybackslash}p{.14\linewidth}@{}}
\toprule
Method & Executable checks$^a$ & Evidence receipts$^b$ & Counter\-example testing$^c$ & Regression gate$^d$ & Frozen during use$^e$ & Reported use \\
\midrule
\multicolumn{7}{@{}l}{\textit{Judges, rubrics, and verifiers}} \\
LLM-as-a-judge \citep{zheng2023judging} & No & No & No & No & --- & Evaluation \\
Prometheus \citep{kim2024prometheus} & No & No & No & No & --- & Evaluation \\
LMUnit \citep{saadfalcon2025lmunit} & No & No & No & No & --- & Evaluation \\
Agent-as-a-Judge \citep{zhuge2025agentjudge} & No & Part. & No & No & --- & Evaluation \\
Autorubric \citep{rao2026autorubric} & No & Part. & No & No & Yes & Evaluation, RL \\
Rubrics as Rewards \citep{gunjal2026rubrics} & No & No & No & No & Yes & RL \\
Checklist feedback \citep{viswanathan2025checklists} & Part. & No & No & No & Yes & Preference optimization \\
Trained verifiers \citep{cobbe2021verifiers,lightman2024verify,zhang2025genverifier} & No & No & No & No & Yes & Reranking \\
\midrule
\multicolumn{7}{@{}l}{\textit{Constructing and repairing rewards}} \\
LM reward design \citep{kwon2023reward} & No & No & No & No & Yes & RL \\
Eureka \citep{ma2024eureka} & Yes & No & No & No & Yes & RL \\
Text2Reward \citep{xie2024text2reward} & Yes & No & No & No & Yes & RL \\
CodeT \citep{chen2023codet} & Yes & No & No & No & Yes & Reranking \\
EvoRubrics \citep{ding2026evorubrics} & No & No & Part. & No & No & RL \\
\midrule
Reward Engine (ours) & Yes & Yes & Yes & Yes & Yes & Selection, revision \\
\bottomrule
\end{tabular}

\smallskip
\begin{minipage}{\linewidth}\footnotesize\raggedright
$^a$Each criterion is bound to a stored, re-runnable procedure (code or a tool recipe). Reward Engine's procedures include model judgments over located evidence, so executability does not imply a deterministic oracle. Checklist feedback writes verifier programs only for items it can check exactly and averages them with judge scores.\\
$^b$Each judgment retains the inspected evidence and separates unresolved or failed measurement from a violation. Autorubric records a per-criterion explanation and an explicit cannot-assess verdict; Agent-as-a-Judge gathers workspace evidence for each requirement.\\
$^c$The evaluator for a task is tested on labeled counterexamples before use. Several works measure agreement with human labels in aggregate \citep{zheng2023judging,zhuge2025agentjudge,rao2026autorubric,lu2025agentrewardbench}; EvoRubrics adapts its rubric generator adversarially during training rather than testing it on labeled cases.\\
$^d$A revision is accepted only if no previously agreed case is lost; for Reward Engine this holds on generated cases with fallible reference labels (\cref{prop:ratchet}).\\
$^e$The evaluator is fixed while a policy is trained, reranked, or revised against it, that is, during optimization. Eureka and Text2Reward fix the reward within each training run and revise it between runs, as Reward Engine does between rounds; EvoRubrics updates its rubrics at every training step.
\end{minipage}
\end{table}
\FloatBarrier

\section{Method details}
\label{app:method_details}
This section gives the notation, the RewardPack schema, and the implementation details behind the construction, repair, and use stages summarized in \cref{sec:method}. \Cref{fig:framework} shows the evaluated configuration; Builder appears there as Reward Builder and Harness Builder, and QC as Adversarial Testing and Repair \& Validate.

\subsection{Notation}
\label{app:notation}

\Cref{tab:notation} collects the notation used in \cref{sec:setup,sec:method} and this appendix. The model has three layers: the actor $\pi$, the reward $R$, and the construction state $\phi$ that builds rewards. Actor steps $t$, reward versions $v$, and experimental rounds $n$ are distinct indices: additional actor steps need not change the reward.

\begingroup
\small
\renewcommand{\arraystretch}{1.12}
\setlength{\tabcolsep}{5pt}
\begin{longtable}{@{}>{\raggedright\arraybackslash}p{.25\linewidth}>{\raggedright\arraybackslash}p{.71\linewidth}@{}}
\caption{Notation for reward construction, execution, and revision.}\label{tab:notation}\\
\toprule
Symbol & Meaning \\
\midrule
\endfirsthead
\multicolumn{2}{l}{\small\textit{Notation (continued)}}\\
\toprule
Symbol & Meaning \\
\midrule
\endhead
\bottomrule
\endfoot
\multicolumn{2}{@{}l}{\textit{Task and actor}} \\
$x$ & Public task requirements. \\
$\Info$, $\Delta\Info$ & Public information: the task, genuine user updates, permitted public knowledge, the execution history, and $\mathcal B$; it excludes hidden benchmark answers and grading outputs. $\Delta\Info$ is the new public evidence admitted at a round boundary. \\
$\mathcal B$ & Initial task-only runs available for construction (three per task). \\
$\pi$ & Actor: a model run in the Actor Harness, held fixed. \\
$h_t$, $a_t$, $o_t$ & Actor context, action, and observation at step $t$. \\
$y$ & Product: delivered artifact with its interaction trace and environment state. \\
\midrule
\multicolumn{2}{@{}l}{\textit{Reward}} \\
$R=\{(c_i,\kappa_i)\}_{i=1}^{m}$ & Reward: criteria $c_i$ bound to executable checks $\kappa_i$; $R_v$ is version $v$, stored in a RewardPack. \\
$\kappa_i(y;x)=(z_i,e_i)$ & Result of check $\kappa_i$ on product $y$ and task inputs $x$; $R(y)=(\kappa_i(y;x))_{i=1}^{m}$ (\cref{eq:measurement}). \\
$z_i$, $\mathcal Z_i$, $\bot$ & Judgment, its value space ($\{\mathsf{pass},\mathsf{fail}\}$ for a testcase, $\{\mathsf{fail},\mathsf{partial},\mathsf{full}\}$ for a rubric), and no judgment. Multiple bound tests give the tuple in \cref{app:formal_contracts}. \\
$e_i$ & Receipt: evidence references and execution status; an execution error is distinct from an observed criterion failure. \\
\midrule
\multicolumn{2}{@{}l}{\textit{Construction and repair}} \\
$\phi=(\omega,\Memory)$ & Construction state: Strategy and Memory; $\phi_k$ after $k$ meta-loop updates. \\
$\Build_\phi$, $\Attack$, $\Repair_\phi$ & Construction, self-attack, and repair operators (\cref{eq:build_operator,eq:attack}, \cref{sec:repair}). \\
$(y_j,i_j,z_j^\star)$ & Labeled case $j$: product, targeted criterion, and reference label. \\
$A_v$, $Q_v$ & Cases generated at version $v$, and the repair ledger $Q_v=Q_{v-1}\cup A_v$. \\
$G_v$, $F_v$ & Cases of $Q_v$ on which $R_v$ agrees with the reference label; cases of $A_v$ on which it disagrees. \\
$\widetilde R$; $\widetilde G$, $\widetilde U$ & Candidate repair; cases of $Q_v$ on which it agrees or is unresolved (\cref{eq:accept}). \\
\midrule
\multicolumn{2}{@{}l}{\textit{Use and meta loop}} \\
$\Critic_R$, $f$ & Critic under the frozen reward $R$, and the feedback it appends at a terminal review (\cref{eq:feedback}). \\
$S$, $R_S$ & Checks targeted by a terminal review, and the restriction of $R$ to them (\cref{eq:feedback}). \\
$\mathcal Y_n$, $y^\circ$, $y^\star$, $\succ$ & Candidates of round $n$, task-only candidate, Selected by reward, and the reduced Jury ranking (\cref{eq:select}). \\
$\mathcal D_{\rm tr}$, $\mathcal D_{\rm val}$, $\mathcal D_{\rm te}$ & Training, validation, and test task families of the meta loop. \\
$t$, $v$, $k$, $n$ & Actor steps, reward versions, meta-loop updates, and rounds; $i$, $j$, and $\ell$ index criteria, labeled cases, and the tests bound to a criterion. \\
\end{longtable}
\endgroup

Budgets, scopes, and delivery contracts are named in words rather than symbols. Benchmark scores and success statistics use separate notation, defined in \cref{app:statistics}.

\subsection{RewardPack schema and judgment semantics}
\label{app:formal_contracts}
A RewardPack stores one reward version $R_v=\{(c_i,\kappa_i)\}_{i=1}^{m}$ together with its scope, version metadata, and repair ledger; \cref{tab:rewardpack_schema} lists the stored records.

\begin{table}[htbp]
\caption{Records stored in a RewardPack. Every published criterion has at least one bound test; obligations that are deferred or cannot be mapped remain unknown in a separate coverage ledger.}
\label{tab:rewardpack_schema}
\centering\footnotesize
\begin{tabular}{@{}>{\raggedright\arraybackslash}p{.2\linewidth}>{\raggedright\arraybackslash}p{.1\linewidth}>{\raggedright\arraybackslash}p{.64\linewidth}@{}}
\toprule
Record & Producer & Fields \\
\midrule
Criterion ($c_i$) & Jury & Identifier; standard; hard or soft kind with its origin and evidence; applicability and priority rationale; measurement question; subrequirements; fail, partial, and full anchors. \\
Test binding ($\kappa_i$) & Builder & Test and criterion identifiers; checker key, which a registry dispatches to a function or model evaluator; arguments; question; rubric; optional tool recipe of tool calls and restricted computations. \\
Measurement record (receipt $e_i$) & Harness & Artifact and environment snapshots; scope; reward and check versions; input references; procedure type; observations; judgment; execution status. \\
Scope & Caller & Applicable requirements and conversation, admitted sources, user amendments, and environment assumptions. \\
Version metadata & QC & Version identity and hash; parent version; changed criteria and checks; fresh measurements; review records bound to the identity, scope, and hash; selection rationale. \\
Repair ledger ($Q_v$) & QC & Case identity; material (artifact and history); case type (control, error, or boundary); targeted criterion; reference label; agreement recorded once per version and replayed. \\
\bottomrule
\end{tabular}
\end{table}

\paragraph{Judgment semantics.} For criterion $i$ with bound tests $\ell\in L_i$, the judgment is the tuple $z_i=(z_{i\ell})_{\ell\in L_i}$ with $z_{i\ell}\in\mathcal Z_{i\ell}\cup\{\bot\}$ binary or graded; the single-test judgments of \cref{sec:setup} are the case $|L_i|=1$. Three outcomes are kept apart throughout: a failing judgment records an observed violation, $\bot$ records insufficient evidence, and a parser or tool exception is an execution error, recorded in the receipt $e_i$, that establishes neither a violation nor its cause. No scalar is imposed. A consumer that needs one supplies a reduction rule, under which $(\mathsf{pass},\bot)$ earns no implicit partial credit and an all-unknown tuple remains unresolved; hard or soft kinds and priorities imply no global weights, vetoes, or aggregation rule. A receipt may be reused only if its artifact, environment snapshot, requirement scope, and check version all match the current measurement. This is admissibility for reuse, not semantic truth, and it cannot supply missing observations, so a modified workbook needs fresh receipts and a checker repair keeps only receipts whose bindings still match.

\paragraph{Information boundary.} Construction and feedback use only $\Info$ (\cref{sec:setup}). Genuine user updates enter $\Info$, and internal suggestions never add requirements. Hidden answers and grader outputs are excluded by design. Public benchmark materials that an actor retrieves count as public knowledge (\cref{tab:claw_eval_three_round_sensitivity}).

\subsection{Construction and regression-gated repair}
\label{app:runtime_details}
\paragraph{Jury.} Three Jury members draft obligations from the public requirements and permitted knowledge, then reconsider them against the initial task-only artifacts. Each phase needs at least two usable drafts; missing outputs do not count as agreement, and too few drafts or a failed reduction yield no publishable specification. A reducer merges both phases, and validation maps, defers, or excludes with a reason every identified obligation. Obligations that were never proposed remain a semantic omission risk that validation cannot detect. Knowledge, strategies, and QC hypotheses may suggest checks but do not establish requirements.

\paragraph{Builder and delivery contracts.} Builder produces the bindings in \cref{tab:rewardpack_schema} under the caller's scope, construction budget, and delivery contract, returning an admissible RewardPack or an explicit construction failure that remains in the delivery denominator. The Outcome contract requires schema, binding, and smoke checks and permits synthetic fixtures. Publication additionally requires a receipt for every bound test on a real artifact, not all of them errors, a completed review, and a selection rationale. The artifact must come from a sandboxed actor run charged to construction or a verified archive captured before grading.

\paragraph{Self-attack and repair.} Adversarial review executes the control, error, and boundary cases of \cref{sec:repair}. Reference models label anonymized material against public task evidence, blind to the RewardPack, the attack rationale, and the predicted labels, and replay preserves case identities and material. Beyond \cref{eq:accept}, the implementation accepts a candidate only if known reference labels are unchanged and fresh boundary cases around the repair are supported; otherwise the incumbent is kept. Structural review checks bindings but not semantics, so a rebuild without adversarial review leaves its semantic correction unverified. QC routes confirmed checker defects to Builder, missing or mis-stated obligations to Jury, and independently established artifact defects to actor feedback, and leaves uncertain attributions unresolved; infrastructure failures permit bounded remeasurement. A checker repair preserves the criteria and unrelated checks, and a criterion repair remeasures only the affected bindings. Defects found while a reward is in use are queued and never modify the active version. The BoundaryUpdate operation in \cref{alg:main} keeps, rebuilds, or revises the reward at a round boundary under comparison and tie rules fixed by the caller, carrying forward the repair ledger. It records the resulting reward as the next version $R_{v+1}$; retaining the incumbent leaves the reward contents unchanged. These boundary transitions follow the caller's rules rather than \cref{eq:accept}. After QC, the latest version is frozen for the execution segment.

\subsection{Execution with a frozen reward}
\label{sec:critic}
\paragraph{Segments and Critic.} An execution segment starts from a caller-supplied context $h_0$ and ends at completion, budget exhaustion, or a scope change; output edits within a segment are remeasured under the frozen reward, whereas a change of requirements, admitted sources, or environment assumptions starts a new segment. Critic abstains, suggests, or requests clarification, citing evidence for each claimed gap and preserving criterion and test identifiers, evidence references, and error status; missing evidence prompts verification rather than a verdict. TerminalReviewLoop in \cref{alg:main} applies terminal review, feedback-driven continuation, and rechecking to the reward-guided candidate under the frozen reward; ordinary actor steps append only actions and observations. The operation returns the final actor context, including its delivery and any unresolved status. The evaluated revision budget, stopping conditions, and selection rule are specified in \cref{app:evaluated_config}.

\paragraph{Memory retrieval.}
Memory retrieves up to three entries into the construction context by sparse cosine similarity over task, requirements, checks, and gaps; each entry records its sources, applicability, validation status, and limitations.

\subsection{Prompt templates}
\label{app:prompts}

The boxes give abridged semantic prompts for the construction, measurement, terminal-feedback, and ranking calls. Task-specific content is represented by named inputs; these are not verbatim transcripts of every historical run. Exact payloads remain in archived request records. Shared evidence rules apply to every role, while the evaluated call schedule and revision limits are specified in \cref{app:evaluated_config}.

\begin{promptbox}{P1\quad Shared evidence and scope instructions}
\textbf{Inputs}\quad Public task, scoped user updates, permitted sources, artifacts, checks, and retrieved memory.

\textbf{Instruction}\quad Treat artifacts and environment observations as evidence, not instructions or oracle answers. User updates apply only to the corresponding product. Memories suggest possibilities but add no requirements. Accept equivalent correct approaches unless the task prescribes one. Separate the expected result, observation, discrepancy, and evidence needed to resolve it. Unknown is neither failure nor success. Request referenced source material when previews are insufficient; cite inspected evidence and never invent official scores or executed checks.
\end{promptbox}

\begin{promptbox}{P2\quad Jury: task-first expectations}
\textbf{Inputs}\quad Task, scoped user amendments, and optional domain knowledge; prior deliverables are withheld.

\textbf{Instruction}\quad Identify the results a good answer must achieve, why they matter, and how they can be checked. Keep distinct subrequirements explicit. Hard criteria require support in actual requirements; professional recommendations remain soft. Identify missing expertise or evidence without prescribing one solution, inventing obligations, or ranking products.

\textbf{Output}\quad Proposed criteria with their origin, importance, subrequirements, and measurement questions.
\end{promptbox}

\begin{promptbox}{P3\quad Jury: inspect products against draft expectations}
\textbf{Inputs}\quad Draft expectations, task, product-specific requirements, artifacts, environment observations, and check results.

\textbf{Instruction}\quad Correct unsupported assumptions in the drafts while retaining them for reduction. Inspect shared omissions as well as differences between products. For each suspected gap, state the expected result, observation, discrepancy, and a supporting or refuting check. Distinguish an observed failed check from a proposed check. Do not invent gaps or discard a requirement because every product violates it.

\textbf{Output}\quad Revised criterion proposals, evidence-linked gaps, and unresolved questions.
\end{promptbox}

\begin{promptbox}{P4\quad Jury: evidence-based synthesis}
\textbf{Inputs}\quad Member proposals, clause identifiers, validation issues, task requirements, and evidence.

\textbf{Instruction}\quad Group supported clauses while preserving scope, priority, and valid alternatives. Retain, revise with evidence, defer, or exclude each proposed clause with a reason; omissions remain unresolved. Resolve disagreement through task evidence, not vote counts. Preserve consequential gaps and their verification needs, and do not turn a recommendation into a hard requirement.

\textbf{Output}\quad Final criteria and clause-level decisions with their provenance.
\end{promptbox}

\begin{promptbox}{P5\quad Builder: bind standards to executable checks}
\textbf{Inputs}\quad Accepted criteria, checker and measurement-tool catalogs, and permitted evidence.

\textbf{Instruction}\quad Preserve criterion identity, meaning, priority, and scope. Define fail, partial, and full anchors and propose a reusable check, splitting only when distinct execution methods require it. Prefer a parser, calculation, runtime check, or source lookup when applicable. Distinguish semantic judgment from code execution and report missing implementation capabilities. Bind checks to the current product and keep unmeasurable requirements visible.

\textbf{Output}\quad Rubric anchors and test bindings, including inputs, checker choice, and any missing implementation.
\end{promptbox}

\begin{promptbox}{P6\quad Evaluation Harness: evidence-backed measurement}
\textbf{Inputs}\quad One bound test, its standard and anchors, current product, sources, and executed tool observations.

\textbf{Instruction}\quad Evaluate only the supplied standard against observed evidence. Reading a file does not establish correctness. Request missing evidence or a supported tool; never simulate its execution. A failure needs a localized defect, a pass needs support for the material requirements, and unresolved essential evidence stays unknown. Separate advisory improvements from the verdict.

\textbf{Output}\quad A pass, fail, or unknown judgment with reasons, localized gaps, and evidence references. The Harness attaches execution receipts and error status.
\end{promptbox}

\begin{promptbox}{P7\quad Terminal Critic: review and target verification}
\textbf{Inputs}\quad Deliverable, frozen criteria, available test identifiers, open questions, and earlier findings.

\textbf{Instruction}\quad Review correctness and completeness using claims as written. Cite evidence for defects, preserve correct work, and recheck earlier findings for regressions. Prior observations are not current passes. Select targeted checks for consequential doubts, distinguishing defects from measurement errors and missing evidence. Keep improvement suggestions separate from established failures.

\textbf{Output}\quad Evidence-linked findings, proposed checks, remaining gaps, and review status. The controller executes checks and applies the limits in \cref{app:evaluated_config}.
\end{promptbox}

\begin{promptbox}{P8\quad Jury: rank completed candidates}
\textbf{Inputs}\quad Candidates, their actual user requirements, accessible evidence and receipts; the reducer also receives member comparisons.

\textbf{Instruction}\quad Compare outcomes rather than effort or compliance with advice. Identify supported quality differences before ranking. Do not transfer undisclosed requirements between products. Use ties when differences lack support and unknown when evidence is insufficient. Shared defects remain defects in the winner; ranking does not establish absolute completion. Resolve disagreements through evidence rather than vote counts.

\textbf{Output}\quad Ordered candidate groups, ties and unjudgeable candidates, with evidence-linked reasons; account for each candidate exactly once.
\end{promptbox}

Schema validation, checker dispatch, evidence binding, revision limits, and the task-only fallback are controller operations outside these prompts.

\FloatBarrier
\section{Theoretical Analysis}
\label{app:theory}
\subsection{Ratchet property}
\label{app:ratchet_proof}
\begin{proposition}[Ratchet]
\label{prop:ratchet}
If $R_{v+1}$ is obtained from $R_v$ by the acceptance rule \cref{eq:accept}, then $G_v\subseteq G_{v+1}$, and $|G_{v+1}|\geq|G_v|+1$ when the candidate is accepted. Hence, along such transitions, a case on which the reward once agreed with its reference label is never lost, and at most $|Q|-|G_0|$ repairs are accepted, where $Q$ is the final ledger.
\end{proposition}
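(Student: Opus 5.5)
The proof is a case split on whether the candidate $\widetilde R$ is accepted, followed by a counting argument over the accepted repairs. The first thing to fix is the index set of $G_{v+1}$. By definition it is the set of cases of $Q_{v+1}=Q_v\cup A_{v+1}$ on which $R_{v+1}$ agrees. Labels of cases already in $Q_v$ are recorded once and replayed, so they do not change. Any new attack cases only enlarge the ledger. Hence it suffices to compare agreement of $R_{v+1}$ and $R_v$ on $Q_v$, and $G_{v+1}\cap Q_v\subseteq G_{v+1}$. To make this airtight, I rely on the stipulation that judgments are recorded once per version and replayed. This gives agreement of a fixed reward version on a fixed case a well-defined truth value, despite the checks being possibly nondeterministic model judgments.

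\emph{Rejected case.} If the candidate is rejected, then $R_{v+1}=R_v$. Replaying the recorded judgments, $R_{v+1}$ agrees on exactly the cases of $Q_v$ that $R_v$ agreed on, namely $G_v$. Therefore $G_v\subseteq G_{v+1}$.

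\emph{Accepted case.} If the candidate is accepted, then $R_{v+1}=\widetilde R$, and the cases of $Q_v$ on which it agrees are exactly $\widetilde G$. The no-regression clause of \cref{eq:accept} gives $G_v\subseteq\widetilde G\subseteq G_{v+1}$. For the strict increase, the fixes-a-failure clause supplies some $j\in F_v\cap\widetilde G$. Since $F_v$ consists of cases on which $R_v$ disagrees, $j\notin G_v$, while $j\in G_{v+1}$. Hence $|G_{v+1}|\geq|G_v|+1$.

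\emph{Never lost.} The "never lost" claim follows by induction along any chain of such transitions: $G_v\subseteq G_{v'}$ for all $v\le v'$.

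\emph{Counting bound.} Each accepted repair raises $|G|$ by at least one, and rejections do not decrease it. So after $r$ accepted repairs, $|G_{\text{final}}|\geq|G_0|+r$. Every $G_v$ is a subset of its ledger, and ledgers grow monotonically to the final $Q$. Therefore $|G_0|+r\le|Q|$, that is, $r\le|Q|-|G_0|$.

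The main obstacle is the bookkeeping, not any substantive step. Three points need care. First, $G_v$ is defined relative to the current ledger $Q_v$ while $\widetilde G$ is relative to $Q_v$ before new cases are added, so the two must be reconciled. Second, the disagreement set $F_v$ is a subset of $A_v$, which is what ensures $j\notin G_v$. Third, the argument assumes that replayed labels and judgments are fixed. The proposition explicitly covers only QC transitions. BoundaryUpdate steps are excluded, and I would note that in that setting the chain must consist solely of \cref{eq:accept}-governed transitions.
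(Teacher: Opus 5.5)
Your proof is correct and follows essentially the same route as the paper's: replay fixes the recorded judgments of $R_{v+1}$ on $Q_v$ to those of $\widetilde R$ (accepted) or $R_v$ (rejected), and the no-regression clause gives $G_v\subseteq\widetilde G\subseteq G_{v+1}$. The fixes-a-failure clause then supplies a case in $G_{v+1}\setminus G_v$, counting against $G\subseteq Q$ bounds the number of accepted repairs, and your restriction to gate-governed transitions (excluding BoundaryUpdate) matches the caveat the paper states right after its proof.
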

\begin{proof}[Proof of \cref{prop:ratchet}]
Judgments are recorded once per version and replayed, so $G_v$ is fixed given $v$, and on $Q_v$ the recorded judgments of $R_{v+1}$ are those of $\widetilde R$ if it is accepted and those of $R_v$ otherwise. Since $Q_v\subseteq Q_{v+1}$, this gives $G_{v+1}\supseteq\widetilde G$ after acceptance and $G_{v+1}\supseteq G_v$ after rejection. After acceptance, the no-regression clause gives $\widetilde G\supseteq G_v$, so $G_v\subseteq G_{v+1}$ in both cases. The fixes-a-failure clause further supplies some $j^\ast\in F_v\cap\widetilde G$; as $R_v$ disagrees on $j^\ast$, we have $j^\ast\in G_{v+1}\setminus G_v$, so $|G_{v+1}|\geq|G_v|+1$. Each accepted repair thus adds at least one case to a set contained in $Q$, so at most $|Q|-|G_0|$ repairs are accepted.
\end{proof}
The statement concerns the set $G_v$; the agreement \emph{rate} on $Q_v$ can still fall when an attack round adds failing cases. It is a property of the acceptance rule, not of the attacker: it says nothing about cases that were never generated, and a systematically wrong reference label is ratcheted in as easily as a correct one. Round-boundary updates follow the caller's rules (\cref{app:runtime_details}) and are covered only when they pass the same gate. Held-out evaluation of the frozen reward against independent labels addresses these gaps.

\section{Experimental details}
\label{app:office50}\label{app:benchmarks}

\subsection{Benchmarks}
\label{app:benchmark_descriptions}
\subsubsection{WorkBuddy Bench Office}
\label{app:workbuddy_benchmark}
WorkBuddy Bench Office is the Office subset of a role-based benchmark whose other subsets cover Code, Web, and Security \citep{workbuddybench2026}. We use all 50 tasks of the version-1.0 Office subset with equal weight. They involve spreadsheets, CSV, PDF, and word-processing documents (24 data/file operations, 17 document operations, and nine workspace-automation tasks), and their difficulty lies in data relationships, document structure, state, and supporting evidence rather than file size. Agents run in a sandboxed Docker workspace through Harbor. The CompositeVerifier combines deterministic checks on the submitted artifacts with a model judge over extracted evidence through a task-specific weighted sum; across our 150 final receipts, the deterministic weight ranges from 0.70 to 0.95. A run passes at 80 (inclusive). These external graders are distinct from the internal RewardPack measurements. Among 450 round-candidate records, the only invalid score is the initial-round task-only score on \texttt{analyst-forecast-extract-L3-018}, archived as 72.8 with an invalid-scoring flag; we count it as zero and retain the raw value and flag. Selection returned the revision in that round, so it is unaffected.

\subsubsection{Claw-Eval}
\label{app:claw_benchmark}\label{app:claw_eval}\label{sec:claw_eval}\label{app:pass_rates}
Claw-Eval evaluates autonomous agents on tasks with executable environments and human-checked rubrics \citep{ye2026claweval,clawevaldocs2026}. It contains 300 tasks and 2,159 rubrics across nine categories (161 general, 101 multimodal, and 38 multi-turn tasks), provides sandboxed services, records trajectories, and distinguishes completion, safety, and robustness; its leaderboard uses $\mathrm{Pass}^{3}$ over three independent trials. We use 104 frozen task versions from 68 scenarios, including 36 bilingual pairs, scored on the continuous 0--100 native scale with a custom model judge; a run passes at 75 (inclusive). Each task version is evaluated over three sequential rounds of task-only, revision, and hypothesis-guided candidates, yielding 936 graded round candidates. Baseline repetition and aggregation are defined in \cref{app:baselines}. Our empirical three-run success metrics follow the aggregation in \cref{app:baselines,app:statistics}; the sequential rounds are dependent, unlike the leaderboard's independent-trial protocol.

\subsection{Baseline and compared policies}
\label{app:baselines}
Baseline uses two repetitions of three task-only runs per task (six executions). In every run, the same actor receives only the public task and its attachments. For a three-run statistic $g$ (\cref{app:statistics}) and native score $s^{(q)}_{ir}$ of task $i$ in repetition $q$ and run $r$,
\[
 g_i(\mathrm{Baseline})
 =\frac12\sum_{q=1}^{2}g\bigl(s^{(q)}_{i1},s^{(q)}_{i2},s^{(q)}_{i3}\bigr).
\]
Revision, hypothesis guidance, and Selected by reward apply $g$ to their three sequential rounds. We write $s_i(A)$ for the task-level mean score of method $A$; aggregation precedes paired resampling. Uniform choice averages the three candidates per task-round; Oracle takes their maximum benchmark score. Headroom is $(\text{policy}-\text{uniform})/(\text{oracle}-\text{uniform})$.

\subsection{Implementation details}
\label{app:evaluated_config}\label{app:runtime_provenance}
\begin{table}[htbp]
\caption{Evaluated configuration. On both benchmarks, the reward-guided revision permits at most two revisions with at most three targeted checks per review, and the task-only and hypothesis-guided candidates have delivery review disabled. Baseline repetition and aggregation are defined in \cref{app:baselines}.}
\label{tab:exp_config}
\centering\footnotesize\setlength{\tabcolsep}{4pt}
\begin{tabular}{>{\raggedright\arraybackslash}p{.15\linewidth}>{\raggedright\arraybackslash}p{.44\linewidth}>{\raggedright\arraybackslash}p{.33\linewidth}}
\toprule
Benchmark & Candidate generation and models & External grading \\
\midrule
WorkBuddy Bench Office & Three sequential iterations, each producing task-only, reward-guided revision, and hypothesis-guided candidates; results are aggregated across all three iterations. Actor: DeepSeek V4 Flash. & Official task-specific composite of deterministic rules and a model judge (rule weight 0.70--0.95). \\
Claw-Eval & Three sequential rounds of fresh candidates with same-task pool, checks, memory, and RewardPack carryover. Actor: DeepSeek V4 Flash; Jury and proposal roles also use MiniMax, Qwen, and GLM models. & Native grader with a custom DeepSeek V4 Flash judge. \\
\bottomrule
\end{tabular}
\end{table}

\paragraph{Candidates.}
Each round launches a fresh task-only actor that receives only the public task and its attachments. The reward-guided revision additionally receives the current RewardPack and terminal Critic review: a model overview proposes up to three targeted checks per review, and each nonempty review permits one continuation in the same actor session followed by a recheck, with at most two revisions. Measurement, Critic calls, and continuations consume the execution budget. Revision stops early only when the current overview reports no gaps, nothing is deferred or carried over, and the selected checks and all blocking requirements pass; unselected checks remain unverified, so stopping does not certify full-pack coverage. When the budget is exhausted, the latest artifact is kept with its unresolved status. The hypothesis-guided candidate receives guidance from a Hypothesis module but neither the RewardPack nor the candidate pool. Before each round, the module reads the public task, Jury gaps, permitted knowledge, and same-task artifacts and checks from earlier rounds, and returns fallible guidance (a method, an optional target, the expected benefit, a verification plan, and clarification questions) or a keep decision, which reduces the candidate to task-only generation. On WorkBuddy Bench Office the archive records the guidance as applied on 41 tasks, a keep decision on two, and an unknown state on seven. Because the surrounding loop accumulates candidates, measurements, memory, and Jury decisions, ``task-only'' describes the actor's prompt, not independence from the loop.

\paragraph{Selection.}
Three Jury members compare the candidates. A valid record places every candidate exactly once, in an ordered tie group or an unknown list, with artifact references and reasons; abstention is valid, whereas malformed output is not. A model reducer adjudicates at least two valid records against the evidence without counting votes, using ties for unsupported differences and unknowns for insufficient evidence. The terminal Jury ranks all accumulated candidates, but the selector chooses among the current round's three: the highest-ranked candidate strictly ahead of the task-only candidate with no known failure on a test, matched by identifier, that the task-only candidate passes, with eligible ties favoring the revision over the hypothesis-guided candidate; otherwise the task-only candidate.

\subsection{Evaluation metrics and statistical analysis}
\label{app:statistics}
Success thresholds are $\eta=80$ for WorkBuddy Bench Office and $\eta=75$ for Claw-Eval.

For a three-run group with benchmark scores $s_1,s_2,s_3$ and successes $b_r=\mathbf 1[s_r\geq\eta]$, the three-run statistics $g$ are the mean score, average pass rate, success coverage, and consistency:
\[
 \frac13\sum_{r=1}^{3}s_r,\qquad
 \frac13\sum_{r=1}^{3}b_r,\qquad
 b_1\vee b_2\vee b_3,\qquad
 b_1\wedge b_2\wedge b_3.
\]
Method-specific task-level aggregation is defined in \cref{app:baselines}. After that aggregation, task averages of coverage and consistency give $\mathrm{pass@3}$ and $\mathrm{Pass}^{3}$, respectively. Over the task set $\mathcal T$, we report task-averaged scores $\bar s(A)=|\mathcal T|^{-1}\sum_{i\in\mathcal T}s_i(A)$ and paired differences from Baseline, $\Delta(A)=|\mathcal T|^{-1}\sum_{i\in\mathcal T}\bigl(s_i(A)-s_i(\mathrm{Baseline})\bigr)$.

Paired 95\% percentile intervals use 200,000 resamples of WorkBuddy Bench Office tasks (NumPy default generator, seed 20260915) or the 68 Claw-Eval scenario clusters (seed 20261005), preserving bilingual task versions and all observations belonging to each sampled unit; sampled task versions receive equal weight. Differences of at most $10^{-8}$ count as ties for W/T/L, the two-sided sign test (which excludes ties), and the Wilcoxon signed-rank test. Scores are never rounded before comparison. Intervals are nominal and unadjusted for multiple comparisons. Claw-Eval task-level tests do not account for bilingual dependence; its uncertainty is assessed with the scenario-cluster intervals.

Relative failure reduction is $100\bigl(1-\frac{1-\bar p(A)}{1-\bar p(\mathrm{Baseline})}\bigr)$, where $\bar p(A)$ is the task-averaged pass rate of method $A$ at the reporting threshold. Both pass rates are recomputed together in each of the same 200,000 paired bootstrap draws; no draw has zero Baseline failure rate. This statistic and the absolute pass-rate difference summarize the same outcomes. The per-task gains in \cref{fig:benchmark_results}d,e are $s_i(\text{Selected by reward})-s_i(\mathrm{Baseline})$; values within $10^{-8}$ of zero are ties.

\subsection{Grading provenance and reproducibility}
\label{app:grading_provenance}
\paragraph{WorkBuddy Bench Office score provenance.} Archived deliveries requiring native scores were restored from their exact workspaces. We matched candidate identities and artifact records and verified inventory SHA-256 hashes, including six reused deliveries. On October 6, 2026, we ran the original task-specific rule checks in isolated task Docker environments and the native rubric judge with DeepSeek V4 Flash via OpenLux. No actor was rerun and no delivered artifact was revised. Native technical retries and format repair were retained; scores were not retried for quality. Source bindings, grader receipts, errors, and request/response bodies (except the first smoke run, which retains verdicts) are archived separately. This later grading introduces temporal judge and environment variation; bootstrap intervals condition on accepted scores and do not capture fresh-judge variability.

\subsection{Compute resources and costs}
\label{app:existing_costs}
The available token records cover auxiliary offline analyses, including failed attempts, rather than the complete benchmark workflow. Actor execution, reward construction, candidate generation, selection, and grading are not covered by a complete, non-overlapping cost ledger, and timing records omit some calls and retries. These records therefore support no total-cost, throughput, or equal-budget comparison.

\FloatBarrier
\section{Additional experimental results}
\label{app:additional_results}
This section supports the main results in \cref{sec:main_results}. \Cref{app:three_round_diagnostics,app:per_round,app:sensitivity} give delivery diagnostics and sensitivity analyses, and \cref{sec:selection_quality} details selection.

\subsection{Detailed main results}
\label{app:three_round_diagnostics}
\Cref{tab:delivery_diagnostics} collects paired effect estimates and diagnostic tests against Baseline. On Claw-Eval, selection has the highest mean and coverage, whereas revision has the highest all-round success (91/104 versus 87/104); their paired differences in mean, average pass rate, and $\mathrm{Pass}^{3}$ all have intervals that include zero. Relative failure reductions are 14.0\% ($[-4.17,32.74]$) on WorkBuddy Bench Office and 37.5\% ($[8.00,57.58]$) on Claw-Eval; their definition and resampling procedure are given in \cref{app:statistics}.

\begin{table}[htbp]
\caption{Paired delivery diagnostics against Baseline. Brackets are 95\% intervals from task or scenario resampling. Success-rate differences are percentage points (pp); median differences are score points. Sign and Wilcoxon tests are two-sided and exclude ties. Aggregation and statistical procedures are defined in \cref{app:baselines,app:statistics}.}
\label{tab:delivery_diagnostics}
\centering\small\setlength{\tabcolsep}{5pt}
\begin{tabular}{lrr}
\toprule
Quantity & Revision & Selected by reward \\
\midrule
\multicolumn{3}{l}{\textit{WorkBuddy Bench Office}} \\
$\Delta$ mean & +0.05 [$-$2.42, 2.27] & \textbf{+1.16} [$-$1.60, 3.96] \\
$\Delta$ avg. pass (pp) & +4.00 [$-$3.33, 11.00] & \textbf{+4.67} [$-$1.33, 11.00] \\
$\Delta\mathrm{pass@3}$ (pp) & $-$1.00 [$-$10.00, 7.00] & \textbf{+1.00} [$-$7.00, 8.00] \\
$\Delta\mathrm{Pass}^{3}$ (pp) & \textbf{+11.00} [2.00, 21.00] & +9.00 [0.00, 19.00] \\
Median $\Delta$ & +1.12 & +0.94 \\
Sign $p$ & 0.0649 & 0.0854 \\
Wilcoxon $p$ & 0.27373 & 0.06107 \\
\midrule
\multicolumn{3}{l}{\textit{Claw-Eval}} \\
$\Delta$ mean & +1.53 [$-$0.42, 3.33] & \textbf{+2.03} [0.90, 3.21] \\
$\Delta$ avg. pass (pp) & \textbf{+3.85} [0.17, 7.70] & \textbf{+3.85} [0.62, 7.28] \\
$\Delta\mathrm{pass@3}$ (pp) & +0.00 [$-$3.70, 3.43] & \textbf{+1.92} [0.00, 4.46] \\
$\Delta\mathrm{Pass}^{3}$ (pp) & \textbf{+9.13} [2.40, 16.20] & +5.29 [$-$1.49, 12.15] \\
Median $\Delta$ & +1.18 & +0.70 \\
Sign $p$ & $<10^{-4}$ & 0.0008 \\
Wilcoxon $p$ & 0.00097 & 0.00209 \\
\bottomrule
\end{tabular}
\end{table}

\FloatBarrier

\subsection{Per-round results}
\label{app:per_round}
\Cref{tab:per_round_performance} reports initial round and final round means and pass rates at the same thresholds as the main results. Each displayed mean uses one score per task and method in that round. Task-only denotes the fresh candidate generated from the public task in the corresponding sequential round. These rows are round-level diagnostics; the primary aggregates are defined in \cref{app:baselines,app:statistics}. 

\begin{table}[htbp]
\caption{Initial round and final round performance on 50 WorkBuddy Bench Office tasks and 104 Claw-Eval tasks. Pass rates use thresholds 80 and 75, respectively.}
\label{tab:per_round_performance}
\centering\small\setlength{\tabcolsep}{5pt}
\begin{tabular}{lrrrrrr}
\toprule
& \multicolumn{2}{c}{Task-only candidate} & \multicolumn{2}{c}{Revision} & \multicolumn{2}{c}{Selected by reward} \\
\cmidrule(lr){2-3}\cmidrule(lr){4-5}\cmidrule(lr){6-7}
Round & Mean & Pass (\%) & Mean & Pass (\%) & Mean & Pass (\%) \\
\midrule
\multicolumn{7}{l}{\textit{WorkBuddy Bench Office}} \\
Initial & 77.67 & 64.00 & \textbf{81.67} & \textbf{70.00} & 81.46 & 66.00 \\
Final & 79.41 & 62.00 & 81.30 & 70.00 & \textbf{83.52} & \textbf{76.00} \\
\midrule
\multicolumn{7}{l}{\textit{Claw-Eval}} \\
Initial & 91.24 & 94.23 & 91.13 & 92.31 & \textbf{92.51} & \textbf{97.12} \\
Final & 89.93 & 90.38 & \textbf{91.43} & \textbf{94.23} & 90.80 & 90.38 \\
\bottomrule
\end{tabular}
\end{table}
\FloatBarrier

\subsection{Sensitivity analyses}
\label{app:sensitivity}
\subsubsection{WorkBuddy Bench Office score sensitivity}
\label{app:score_sensitivity}
\Cref{tab:six_baseline_wb_sensitivity} reports three-round comparisons with Baseline after excluding the grader-exception task. The exclusion is applied to every method and changes the task population, including removing the largest selection loss; the full-sample comparison remains primary.

\begin{table}[htbp]
\caption{WorkBuddy Bench Office sensitivity against Baseline after excluding the grader-exception task. All methods retain the same exclusion; 95\% intervals resample whole tasks. Detailed pass differences and paired tests are included in the accompanying statistical output.}
\label{tab:six_baseline_wb_sensitivity}
\centering\small\setlength{\tabcolsep}{3pt}
\fitwidth{%
\begin{tabular}{ll}
\toprule
Method & $\Delta$ mean [95\% CI] \\
\midrule
\multicolumn{2}{l}{\textit{Omit grader exception}} \\
Revision & +0.74 [$-$1.23, 2.67] \\
Selected by reward & \textbf{+1.89} [$-$0.40, 4.43] \\
\bottomrule
\end{tabular}
}
\end{table}

A scan of the 150 accepted final-round receipts for explicit evaluator-exception diagnostics found one affected task-only candidate; the scan does not revalidate every grader or artifact. Two accepted final-round scores equal zero, one with missing delivery evidence and no explicit exception and the other with both an exception and missing evidence for the model judge; both remain in the full-sample analysis.

\begin{table}[htbp]
\caption{Claw-Eval results in 104 versions in 67 scenarios. All metrics use the primary aggregation defined in \cref{app:baselines,app:statistics}.}
\label{tab:claw_eval_three_round_sensitivity}
\centering\small
\fitwidth{%
\begin{tabular}{lrrrr}
\toprule
Method & Mean & Avg. pass (\%) & $\mathrm{pass@3}$ (\%) & $\mathrm{Pass}^{3}$ (\%) \\
\midrule
Baseline & 89.34 & 89.64 & 98.06 & 78.16 \\
Revision & 90.88 & \textbf{93.53} & 98.06 & \textbf{87.38} \\
Selected by reward & \textbf{91.39} & \textbf{93.53} & \textbf{100.00} & 83.50 \\
\bottomrule
\end{tabular}
}
\end{table}

\FloatBarrier

\subsection{Selection analysis}
\label{sec:selection_quality}
\Cref{tab:selection_quality} gives the full policy comparison on the generated candidate pools; \cref{tab:reward_utility_summary} summarizes gains over uniform choice. Relative to always returning the revision, selection gains $+1.11$ ($[-0.34,2.75]$) on WorkBuddy Bench Office and $+0.50$ ($[-1.01,2.21]$) on Claw-Eval, so neither interval establishes superiority over that fixed policy. Selection finds a best candidate in 74/150 and 183/312 task-rounds, versus 73/150 and 206/312 for revision. Thus, on Claw-Eval it hits a best candidate less often but has lower average regret; revision recovers 10.13\% and 31.03\% of the uniform-to-oracle gap on the two benchmarks. These comparisons characterize the selector as a whole and exclude candidate-generation costs: the archive does not identify how often rankings relied on receipts rather than artifact text. A compute-matched comparison with repeated sampling and a generic model selector remains untested.

\begin{table}[htbp]
\caption{Selection policies across all three sequential rounds. Intervals use 200,000 paired resamples of 50 WorkBuddy Bench Office tasks or 68 Claw-Eval scenarios, retaining each task's three rounds and all candidate scores. Uniform choice and Oracle operate on the same generated candidate pools as the other reported policies. Best hits include ties within $10^{-8}$; the uniform hit rate is an expectation.}
\label{tab:selection_quality}
\centering\footnotesize\setlength{\tabcolsep}{3pt}
\fitwidth{%
\begin{tabular}{lrlrr}
\toprule
Policy & Score & $\Delta$ vs. reward-guided revision [95\% CI] & Hit rate (\%) & Regret \\
\midrule
\multicolumn{5}{l}{\textbf{WorkBuddy Bench Office}} \\
Always reward-guided revision & 81.10 & +0.00 [0.00, 0.00] & 48.67 & 6.40 \\
Uniform expectation & 80.38 & -0.72 [-2.13, 0.74] & 38.22 & 7.12 \\
Selected by reward & \textbf{82.21} & +1.11 [-0.34, 2.75] & \textbf{49.33} & \textbf{5.29} \\
Oracle & 87.50 & +6.40 [3.83, 9.73] & 100.00 & 0.00 \\
\multicolumn{5}{l}{\textbf{Claw-Eval}} \\
Always reward-guided revision & 90.97 & +0.00 [0.00, 0.00] & \textbf{66.03} & 4.13 \\
Uniform expectation & 89.11 & -1.86 [-3.04, -0.51] & 48.29 & 5.99 \\
Selected by reward & \textbf{91.47} & +0.50 [-1.01, 2.21] & 58.65 & \textbf{3.63} \\
Oracle & 95.10 & +4.13 [2.59, 5.97] & 100.00 & 0.00 \\
\bottomrule
\end{tabular}
}
\end{table}

\FloatBarrier

\end{document}